\documentclass[10pt,lettersize,journal]{IEEEtran}

\usepackage{graphicx}
\usepackage{hyperref}
\usepackage{url}
\usepackage{amsthm}
\usepackage{algorithm}
\usepackage{algpseudocode} 
\usepackage{dsfont}
\usepackage{multirow, makecell}
\usepackage{xcolor}
\usepackage{amsmath,amsfonts,bm}

\newcommand{\colorname}{black}
\newtheorem{definition}{Definition}
\newtheorem{remark}{Remark}

\newtheorem{theorem}{Theorem}
\newtheorem{corollary}{Corollary}

\title{An Upper Bound for the Distribution Overlap Index and Its Applications}


\author{
        Hao~Fu,
        Prashanth~Krishnamurthy,~\IEEEmembership{Member,~IEEE,}
        Siddharth~Garg,~\IEEEmembership{Member,~IEEE,}
        and~Farshad~Khorrami,~\IEEEmembership{Senior~Member,~IEEE}
\thanks{The authors are with the Department of Electrical and Computer Engineering, NYU Tandon School of Engineering, Brooklyn NY, 11201. E-mail: \{ hf881@nyu.edu, pk929@nyu.edu, sg175@nyu.edu, khorrami@nyu.edu\}}}

%


\begin{document}

\maketitle

\begin{abstract}

This paper proposes an easy-to-compute upper bound for the  overlap index between two probability distributions without requiring any knowledge of the distribution models. The computation of our bound is time-efficient and memory-efficient and only requires finite samples. The proposed bound shows its value in one-class classification and domain shift analysis. Specifically, in one-class classification, we build a novel one-class classifier by converting the bound into a confidence score function. Unlike most one-class classifiers, the training process is not needed for our classifier. Additionally, the experimental results show that our classifier \textcolor{\colorname}{can be accurate with} only a small number of in-class samples and outperforms many state-of-the-art methods on various datasets in different one-class classification scenarios.  In domain shift analysis, we propose a theorem based on our bound. The theorem is useful in detecting the existence of domain shift and inferring data information. The detection and inference processes are both computation-efficient and memory-efficient. Our work shows significant promise toward broadening the applications of overlap-based metrics.

\end{abstract}

\section{Introduction}

Machine learning models have been utilized in various applications \cite{fkk20,pfkhk23,pfkk23,sfkgk23}. Measuring the similarity between distributions is a popular topic in the machine learning study. However, compared to other metrics that measure the distribution similarity, the literature on the distribution overlap index is thin. This paper contributes to the study of the overlap index which is defined as the area intersected by two probability density functions shown in Fig.~\ref{fig:example}(a). Specifically, unlike most related works that  have strong distribution assumptions (e.g., symmetry or unimodality), this work proposes an upper bound for the overlap index with distribution-free settings, in which the probability distributions are unknown.  The bound is easy to compute and contains two key terms: the norm of the difference between the two distributions' means and a variation distance between the two distributions over a subset. The computation of the bound is time-efficient and memory-efficient and requires only finite samples. Even though finding such an upper bound for the distribution overlap index is already valuable, we  further explore two additional applications of our bound as discussed below to broaden the applications of overlap-based metrics. 

One application of our bound is for one-class classification. Specifically, one-class classification refers to a model that outputs positive for in-class samples and negative for out-class samples that are  absent, poorly sampled, or not well defined (i.e., Fig.~\ref{fig:example}(b)). We proposed a novel one-class classifier by converting our bound into a confidence score function to evaluate if a sample is in-class or out-class. The proposed classifier has many advantages. For example, implementing deep neural network-based classifiers requires training thousands of parameters and large memory, whereas implementing our classifier does not require the training process. It only needs sample norms to calculate the confidence score. Besides, deep neural network-based classifiers need relatively large amounts of data to avoid under-fitting or over-fitting, whereas our method \textcolor{\colorname}{is empirically accurate with} only a small number of in-class samples. Therefore, our classifier is computation-efficient, memory-efficient, and data-efficient. Additionally, compared with other traditional one-class classifiers, such as Gaussian distribution-based classifier, Mahalanobis distance-based classifier \cite{LLLS18} and one-class support vector machine \cite{SPSSW01}, our classifier is distribution-free, explainable, and easy to understand. The experimental results show that the proposed one-class classifier outperforms many state-of-the-art methods on various datasets in different one-class classification scenarios.

Another application of our bound is for domain shift analysis. Specifically, a domain shift is a change in the dataset distribution between a model's training dataset and the testing dataset encountered during implementation (i.e., the overlap index value between the distributions of the two datasets is less than 1).  We proposed a theorem to calculate the model's testing accuracy in terms of the overlap index between the distributions of the training and testing datasets and further found the upper limit of the accuracy using our bound for the overlap index. The theorem is useful in detecting the existence of domain shift and inferring data information. Additionally, the detection and inference processes using the proposed theorem are both computation-efficient and memory-efficient and do not require the training process.

\begin{figure*}
    \centering
\includegraphics[width=0.85\textwidth,clip=true,trim=0in 0.15in 0in 0in]{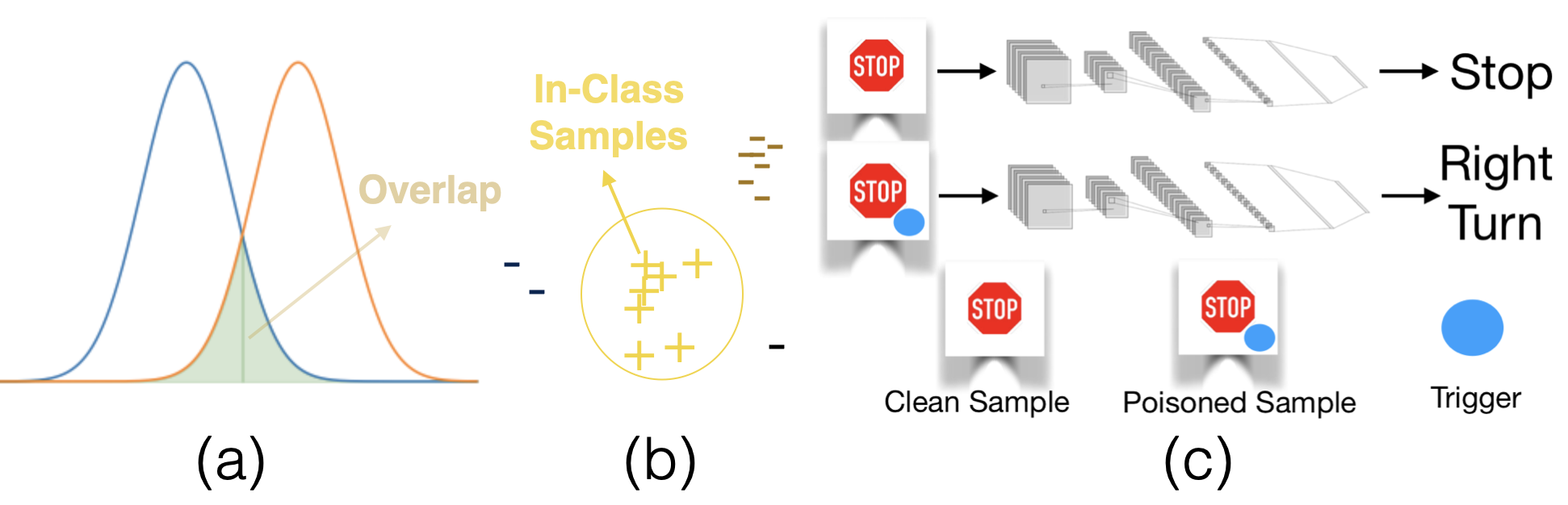}
    \caption{(a): Overlap of two distributions. (b): One-class classification. (c): Backdoor attack.}
    \label{fig:example}
\end{figure*}

Overall, the contributions of this paper include:
\begin{itemize}
    \item Deriving a distribution-free upper bound for the  overlap index.
    \item Building a novel one-class classifier with the bound being the confidence score function.
    \item Showing the outperformance of the one-class classifier over various state-of-the-art methods on several datasets, including UCI datasets, CIFAR-100, sub-ImageNet, etc., and in different one-class classification scenarios, such as novelty detection, anomaly detection, out-of-distribution detection, and neural network backdoor detection.
    \item Proposing a theorem based on the derived bound to detect the existence of domain shift and infer useful data information.
\end{itemize}

\subsection{Background and Related Works}

{\bf Measuring the similarity between distributions}: \cite{GL43} and \cite{W70} introduced the concept of the distribution overlap index. Other measurements for the similarity between distributions include the total variation distance, Kullback-Leibler divergence \cite{KL51}, Bhattacharyya's distance \cite{B43}, and Hellinger distance \cite{H09}.  In psychology,  some effect size measures' definitions involve the concept of the distribution overlap index, such as Cohen's U index \cite{C13}, McGraw and Wong's CL measure \cite{MW92}, and Huberty's I degree of non-overlap index \cite{HL00}. However, they all have strong distribution assumptions (e.g., symmetry or unimodality) regarding the overlap index. \cite{PC19} approximates the overlap index via kernel density estimators.

{\bf One-class classification}: \cite{MH96} coined the term one-class classification. One-class classification intersects with novelty detection, anomaly detection \cite{fkk25}, out-of-distribution detection \cite{fpkk24}, and outlier detection. \cite{YZLL21} explains the differences among these detection areas. \cite{KM14} discusses many traditional non neural network-based one-class classifiers, such as one-class support vector machine \cite{SPSSW01}, decision-tree \cite{CDGL99}, and one-class nearest neighbor \cite{T02}. Two neural network-based one-class classifiers are \cite{RVGDSBMK18} and OCGAN \cite{PNX19}.  \cite{ML22} introduces a Gaussian mixture-based energy measurement and compares it with several other score functions, including maximum softmax score \cite{DK17}, maximum Mahalanobis distance \cite{LLLS18}, and energy score \cite{LWOL20} for one-class classification. 

{\bf Neural network backdoor attack and detection}:  \cite{GLDG19} and \cite{LMALZWZ17} mentioned the concept of the neural network backdoor attack. The attack contains two steps: during training, the attacker injects triggers into the training dataset; during testing, the attacker leads the network to misclassify by presenting the triggers (i.e., Fig.~\ref{fig:example}(c)).   The data poisoning attack \cite{BNL12} and adversarial attack \cite{GPMXWOCB14} overlap with the backdoor attack. Some proposed trigger types are Wanet \cite{TA21}, invisible sample-specific \cite{LLWLHL21}, smooth \cite{ZPMJ21}, and reflection \cite{LMBL20}.  Some methods protecting neural networks from backdoor attacks include neural cleanse \cite{WYSLVZZ19}, fine-pruning \cite{LDG18}, and STRIP \cite{GXWCRN19}. NNoculation \cite{VLTKKKDG21} and RAID \cite{FVKGK22} utilize online samples to improve their detection methods. The backdoor detection problem \cite{FKGK23,f24,fskgk23} also intersects with one-class classification. Therefore, some one-class classifiers can  detect poisoned samples against the neural network backdoor attack.

\textbf{Organization of the Paper:}
We first provide preliminaries and derive the proposed upper  bound for the overlap index in Sec.~\ref{sec:Compute}. We next apply our   bound to domain shift analysis in Sec.~\ref{sec:Backdoor}. We then propose, analyze, and evaluate our novel one-class classifier in Sec.~\ref{sec:Classifier}.  We finally conclude the paper in Section~\ref{sec:Conclusion}.

\section{An Upper Bound for the Overlap Index}
\label{sec:Compute}
\subsection{Preliminaries}
For simplicity, we consider the $\mathbb{R}^n$ space and continuous random variables. \textcolor{\colorname}{We also define $P$ and $Q$ as two probability distributions in $\mathbb{R}^n$ with  $f_P$ and $f_Q$ being their probability density functions}.
\begin{definition}[Overlap Index]
\label{def:overlap}
The overlap $\eta: \mathbb{R}^n \times \mathbb{R}^n \to [0,1] $ of the two distributions is defined: 
\begin{align}
    \eta(P, Q) = \int_{\mathbb{R}^n} \min [f_P(x), f_Q(x)] dx.
    \label{eq:overlap}
\end{align}
\end{definition}

\begin{definition}[Total Variation Distance]
\label{def:total_variation}
    The total variation distance $\delta: \mathbb{R}^n \times \mathbb{R}^n \to [0,1] $ of the two distributions is defined as
    \begin{align}
        \delta(P, Q) = \frac{1}{2}\int_{\mathbb{R}^n} \left | f_P(x) - f_Q(x) \right | dx.
        \label{eq:total_variation}
    \end{align}
\end{definition}

\begin{definition}[Variation Distance on Subsets]
\label{def:subset}
    Given a subset $A$ from $\mathbb{R}^n$, we define $\delta_A: \mathbb{R}^n \times \mathbb{R}^n \to [0,1]$ to be the variation distance of the two distributions on $A$, which is
    \begin{align}
        \delta_A(P,Q) = \frac{1}{2}\int_A |f_P(x) - f_Q(x)| dx.
        \label{eq:subset}
    \end{align}
\end{definition}

\begin{remark}
\label{rem:convertible}
 One can prove that $\eta$ and $\delta$ satisfy the following equation:
\begin{align}
    \eta(P, Q) =  1 - \delta(P, Q) = 1 - \delta_A(P, Q) - \delta_{\mathbb{R}^n\setminus A}(P, Q).
    \label{eq:convertible}
\end{align}
The quantity $\delta_A$ defined in (\ref{eq:subset}) will play an important role in deriving our  upper bound for $\eta$.
\end{remark}


\subsection{The Upper Bound for the Overlap Index}
We now proceed with deriving our proposed upper bound.
\begin{theorem}
\label{thm:bound}
Without loss of generality, assume $D^+$ and $D^-$ are two probability distributions  on a bounded domain $B \subset \mathbb{R}^n$ with defined norm $||\cdot||$ \footnote{In this paper, we use the $L_2$ norm. However, the choice of the norm is not unique and the analysis can be carried out using other norms as well.} (i.e., $\sup_{x\in B} ||x|| < +\infty$), then for any subset $A \subset B$ with its complementary set $A^c = B \setminus A$, we have
\begin{align}
    \eta(D^+, D^-) \le 1 - \frac{1}{2r_{A^c}} ||\mu_{D^+} - \mu_{D^-}|| - \frac{r_{A^c}-r_A}{r_{A^c}}\delta_{A}
    \label{eq:theorem}
\end{align} where $r_A = \sup_{x\in A} ||x||$ and $r_{A^c} = \sup_{x \in A^c}||x||$, $\mu_{D^+}$ and $\mu_{D^-}$ are the means of $D^+$ and $D^-$, and $\delta_A$ is the variation distance on set $A$ defined in {\bf Definition}~\ref{def:subset}. Moreover, let $r_B = \sup_{x\in B} ||x||$,  then we have
\begin{align}
    \eta(D^+, D^-) \le 1 - \frac{1}{2r_{B}} ||\mu_{D^+} - \mu_{D^-}|| - \frac{r_{B}-r_A}{r_{B}}\delta_{A}.
    \label{eq:theorem_special}
\end{align} Since (\ref{eq:theorem_special}) holds for any $A$, a tighter bound can be written as
\begin{align}
    \eta(D^+, D^-) \le 1 - \frac{1}{2r_{B}} ||\mu_{D^+} - \mu_{D^-}|| - \max_A \frac{r_{B}-r_A}{r_{B}}\delta_{A}.
    \label{eq:theorem_tight}
\end{align}
\end{theorem}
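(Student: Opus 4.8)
The plan is to reduce the entire statement to a single core inequality,
\[
\| \mu_{D^+} - \mu_{D^-} \| \le 2 r_A \delta_A + 2 r_{A^c} \delta_{A^c},
\]
where $\delta_{A^c}$ denotes the variation distance on $A^c = B \setminus A$; once this is in hand, all three displayed bounds follow from Remark~\ref{rem:convertible} together with elementary algebra.

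First I would prove the core inequality. Writing the means as $\mu_{D^+} - \mu_{D^-} = \int_B x\,(f_{D^+}(x) - f_{D^-}(x))\,dx$ and applying the triangle inequality for vector-valued integrals gives $\| \mu_{D^+} - \mu_{D^-} \| \le \int_B \|x\|\,|f_{D^+}(x) - f_{D^-}(x)|\,dx$. I would then split this integral over $A$ and $A^c$ and bound $\|x\| \le r_A$ on $A$ and $\|x\| \le r_{A^c}$ on $A^c$. Since $\int_A |f_{D^+} - f_{D^-}|\,dx = 2\delta_A$ and $\int_{A^c} |f_{D^+} - f_{D^-}|\,dx = 2\delta_{A^c}$ by Definition~\ref{def:subset}, this produces exactly the two terms of the core inequality.

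Next I would invoke Remark~\ref{rem:convertible} in the form $\delta_{A^c} = 1 - \eta(D^+, D^-) - \delta_A$ to eliminate $\delta_{A^c}$. Dividing the core inequality by the positive quantity $2 r_{A^c}$, substituting, and rearranging to isolate $\eta(D^+, D^-)$ yields (\ref{eq:theorem}). To obtain (\ref{eq:theorem_special}) I would repeat the derivation but bound $\|x\| \le r_B$ on $A^c$, which is legitimate because $A^c \subset B$ forces $r_{A^c} \le r_B$; replacing $r_{A^c}$ by $r_B$ throughout gives the claim. Finally, (\ref{eq:theorem_tight}) is immediate: since (\ref{eq:theorem_special}) holds for every admissible $A$ while its first two right-hand terms do not depend on $A$, the sharpest version is obtained by maximizing the last term over $A$.

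The computation is largely routine, and the one genuine idea is the core inequality. The crux there is bounding $\|x\|$ asymmetrically --- by $r_A$ on $A$ and by the possibly larger $r_{A^c}$ on $A^c$ --- since it is precisely the appearance of the smaller radius $r_A$ that makes the bound informative rather than trivial. The only technical points I would watch are ensuring $r_{A^c} > 0$ and $r_B > 0$ so the divisions are valid, and preserving the inequality direction when dividing by these positive radii.
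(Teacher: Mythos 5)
Your proposal is correct and matches the paper's own proof essentially step for step: the same core inequality $\|\mu_{D^+}-\mu_{D^-}\|\le 2r_A\delta_A+2r_{A^c}\delta_{A^c}$ obtained via the triangle inequality and the split over $A$ and $A^c$, the same substitution $\delta_{A^c}=1-\delta_A-\eta$ from Remark~\ref{rem:convertible}, the same replacement of $r_{A^c}$ by $r_B$ for (\ref{eq:theorem_special}), and the same maximization over $A$ for (\ref{eq:theorem_tight}). Your explicit attention to the positivity of $r_{A^c}$ and $r_B$ when dividing is a minor point the paper leaves implicit, but it does not change the argument.
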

\begin{proof} 
See Appendix~\ref{app:proof}.
\end{proof}
\begin{remark}
    \label{rem:theorem_assumption}
   The only assumption in this theorem is that the probability distribution domain is bounded. However, almost all real-world applications satisfy the boundedness assumption  since the data is bounded. Therefore, $r_B$ can always be found (or at least a reasonable approximation can be found). Additionally, we can constrain $A$ to be a bounded ball so that $r_A$ is also known. Although the proof of this theorem involves probability density functions, the computation does not require knowing the probability density functions but only finite samples because  we can use the law of large numbers to estimate $||\mu_{D^+} - \mu_{D^-}||$ and $\delta_A$, which will be shown next.
\end{remark}

\subsection{Approximating the Bound with Finite Samples}
Let $g: B \to \{0, 1\}$ be a condition function\footnote{The condition function is an indicator function $\mathds{1}\{condition\}$ that outputs 1 when the input satisfies the given condition and 0 otherwise.} and define $A = \{ x~|~ g(x) = 1, x \in B\}$.   According to the definition of $\delta_A$ and triangular inequality, it is trivial to prove that
\begin{align}
    \delta_{A}(D^+, D^-) 
     \ge \frac{1}{2} \left | \mathbb{E}_{D^+}[g] - \mathbb{E}_{D^-}[g] \right |.
    \label{eq:monte_carlo}
\end{align}
Calculating $\mathbb{E}_{D^+}[g]$ and $\mathbb{E}_{D^-}[g]$ is easy: one just needs to draw samples from $D^+$ and $D^-$, and then average their $g$ values. Applying (\ref{eq:monte_carlo}) into {\bf Theorem}~\ref{thm:bound} gives the following corollary:
\begin{corollary}
\label{cor:bound}
Given $D^+$, $D^-$, $B$, and $||\cdot||$ used in {\bf Theorem}~\ref{thm:bound}, let $A(g) = \{ x~|~ g(x) = 1, x \in B\}$ with any condition function $g: B \to \{0, 1\}$. Then, an upper bound for $\eta(D^+, D^-)$ that can be obtained by our approximation is
\begin{align}
    \eta(D^+, D^-) &\le 1 - \frac{1}{2r_{B}} ||\mu_{D^+} - \mu_{D^-}|| - \\
    &\max_g \frac{r_{B}-r_{A(g)}}{2r_{B}}\left | \mathbb{E}_{D^+}[g] - \mathbb{E}_{D^-}[g] \right |.
    \label{eq:corollary_tight}
\end{align}
\end{corollary}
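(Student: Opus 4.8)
The plan is to obtain this corollary as a direct consequence of the per-subset bound (\ref{eq:theorem_special}) established in {\bf Theorem}~\ref{thm:bound}, combined with the sampling inequality (\ref{eq:monte_carlo}). No new analytic machinery is required, so the argument amounts to a short chain of inequalities for a fixed condition function, followed by an optimization over $g$.

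First I would fix an arbitrary condition function $g: B \to \{0,1\}$ and set $A = A(g) = \{x \mid g(x) = 1,\, x \in B\}$. Since $A(g) \subseteq B$, we have $r_{A(g)} = \sup_{x \in A(g)} ||x|| \le \sup_{x \in B} ||x|| = r_B$, so the coefficient $(r_B - r_{A(g)})/r_B$ is nonnegative. This nonnegativity is the single fact that makes the upcoming substitution legitimate, and it is essentially the only thing that needs checking.

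Next I would instantiate (\ref{eq:theorem_special}) with this choice of $A$, giving
\begin{align}
    \eta(D^+, D^-) \le 1 - \frac{1}{2r_B} ||\mu_{D^+} - \mu_{D^-}|| - \frac{r_B - r_{A(g)}}{r_B}\delta_{A(g)},
    \label{eq:plan_step1}
\end{align}
and then lower-bound $\delta_{A(g)}$ using (\ref{eq:monte_carlo}), namely $\delta_{A(g)} \ge \frac{1}{2}|\mathbb{E}_{D^+}[g] - \mathbb{E}_{D^-}[g]|$. Because the coefficient of $\delta_{A(g)}$ is nonnegative, replacing $\delta_{A(g)}$ by a smaller quantity only enlarges the right-hand side, so the inequality direction is preserved and we obtain
\begin{align}
    \eta(D^+, D^-) \le 1 - \frac{1}{2r_B} ||\mu_{D^+} - \mu_{D^-}|| - \frac{r_B - r_{A(g)}}{2r_B}\left| \mathbb{E}_{D^+}[g] - \mathbb{E}_{D^-}[g] \right|.
    \label{eq:plan_step2}
\end{align}

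Finally, since the left-hand side of (\ref{eq:plan_step2}) does not depend on $g$ while the inequality holds for every admissible $g$, I would optimize over $g$. Rearranging (\ref{eq:plan_step2}) shows that the penalty term $\frac{r_B - r_{A(g)}}{2r_B}\left| \mathbb{E}_{D^+}[g] - \mathbb{E}_{D^-}[g] \right|$ is bounded above by $1 - \frac{1}{2r_B}||\mu_{D^+} - \mu_{D^-}|| - \eta(D^+, D^-)$ uniformly in $g$; taking the supremum (which the statement writes as $\max_g$) over $g$ and moving it back to the right-hand side recovers (\ref{eq:corollary_tight}). I expect no genuine obstacle here beyond confirming the sign of $(r_B - r_{A(g)})/r_B$ and noting that passing to the supremum over $g$ still yields a valid upper bound; both points follow immediately from $A(g) \subseteq B$ and from the fact that each individual $g$ already produces a valid bound.
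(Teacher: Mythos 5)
Your proposal is correct and matches the paper's own (implicit) argument exactly: the paper derives the corollary precisely by substituting the sampling inequality (\ref{eq:monte_carlo}) into the bound of {\bf Theorem}~\ref{thm:bound} and optimizing over $g$. Your added care in checking the nonnegativity of $(r_B - r_{A(g)})/r_B$, which justifies the direction of the substitution, is the only detail the paper leaves unstated, and you handle it correctly.
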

Given several condition functions $\{g_j\}_{j=1}^k$ and finite sample sets (i.e., $\{x_i^+\}_{i=1}^n \sim D^+$ and $\{x_i^-\}_{i=1}^m \sim D^-$), Alg.~\ref{alg:bound} shows how to compute  the RHS of (\ref{eq:corollary_tight}). 

 \begin{algorithm}
\caption{ComputeBound($\{x_i^+\}_{i=1}^n$, $\{x_i^-\}_{i=1}^m$, $\{g_j\}_{j=1}^k$)}  
 \label{alg:bound} 
 \begin{algorithmic}
 \State $B \leftarrow \{x_1^+, x_2^+, ..., x_n^+, x_1^-, x_2^-, ..., x_m^-\}$ and  $r_B \leftarrow \max_{x \in B} ||x||$
  \State $\Delta_\mu \leftarrow \left| \left| \frac{1}{n}\sum_{i =1}^n x_i^+ -\frac{1}{m}\sum_{i=1}^m x_i^- \right| \right| $
     \For {$j = 1 \to k$}
     \State $A = \{x~|~g_j(x)=1, x \in B\}$ and $r_A \leftarrow \max_{x\in A}||x||$
        \State  $s_j \leftarrow \left (1 - \frac{r_A}{r_B} \right)\left | \frac{1}{n}\sum_{i=1}^n g_j(x_i^+) -  \frac{1}{m}\sum_{i=1}^m g_j(x_i^-)\right | $
     \EndFor
     \State {\bf Return:} $1 - \frac{1}{2r_B}\Delta_\mu - \frac{1}{2} \max_j s_j$
 \end{algorithmic}
 \end{algorithm}

 \begin{remark}
     \label{rem:alg}
     The choice of condition functions is not unique. In this work, we use the indicator function $g(x) = \mathds{1}\{ ||x|| \le r\}$, which outputs $1$ if $||x|| \le r$ and 0 otherwise. By setting different values for $r$, we generate a family of condition functions. \textcolor{\colorname}{The motivation for choosing such  indicator function form is that it is the most simple way to separate a space nonlinearly and it saves computations by directly applying $r$ into {\bf Corollary~\ref{cor:bound}}. However, other indicator functions, such as RBF kernel-based indicator functions, are worth exploring and will be considered in our future works.}
 \end{remark}

\section{Application of Our Bound to One-Class Classification}
\label{sec:Classifier}

\subsection{Problem Formulation for One-Class Classification}
 Given $\mathbb{R}^d$ space and $n$ samples $\{x_i\}_{i=1}^n$ that lie in an unknown probability distribution, we would like to build a test $\Psi: \mathbb{R}^d \to \{\pm 1\}$ so that for any new input $x$, $\Psi(x)$ outputs $1$ when $x$ is from the same unknown probability distribution, and outputs -$1$, otherwise. Some applications of $\Psi$ are  novelty detection,   out-of-distribution detection, and backdoor detection (e.g., Fig.~\ref{fig:example}(b, c)). 

 \subsection{A Novel Confidence Score Function}
 Given some in-class samples $\{x_i\}_{i=1}^n$, one can pick several condition functions $\{g_j\}_{j=1}^k$, where $g_j(x)=\mathds{1}\{||x||\le r_j\}$ for different $r_j$, so that $f(x) = ComputeBound(\{x\}, \{x_i\}_{i=1}^n, \{g_j\}_{j=1}^k)  $ defined in Alg.~\ref{alg:bound} is a score function that measures the likelihood of any input, $x$, being an in-class sample.  Alg.~\ref{alg:classifier} shows the overall one-class classification algorithm. 

 \begin{algorithm}
\caption{The Novel One-Class Classifier for the Input $x$}  
 \label{alg:classifier} 
 \begin{algorithmic}
 \State Given in-class samples $\{x_i\}_{i=1}^n$, select several condition functions $\{g_j\}_{j=1}^k$, set a threshold $T_0$
 \If {$ComputeBound(\{x\}, \{x_i\}_{i=1}^n, \{g_j\}_{j=1}^k)\ge T_0$}
 \State $x$ is an in-class sample
 \Else 
 \State $x$ is an out-class sample
 \EndIf
 \end{algorithmic}
 \end{algorithm}

 \begin{remark}
     \label{rem:classifier}
       The score function $f$ measures the maximum similarity between the new input $x$ and the available in-class samples $\{x_i\}_{i=1}^n$. Different $T_0$ lead to different detection accuracy. However, we will show that the proposed one-class classifier has an overall high accuracy under different $T_0$. 
 \end{remark}
\subsection{ Computation and Space Complexities}
 Our algorithm can pre-compute and store $\frac{1}{n}\sum_{i=1}^n x_i$ and $\frac{1}{n}\sum_{i=1}^n g_j(x_i)$ with $j=1,2,...,k$.  Therefore, the total space complexity  is $\mathcal{O}(k+1)$. Assume that the total number of new online inputs is $l$; then, for every new input $x$, our algorithm needs to calculate $||x||$ once and $s_j$ for $k$ times. Therefore, the total computation complexity is $\mathcal{O}(l(k+1))$. Empirically, we restricted $k$ to be a small number (e.g., 10) so that even devices  without strong computation power can  run our algorithm efficiently.  Therefore,  our classifier is computation-efficient and memory-efficient. 

 \subsection{Evaluation}

\textcolor{\colorname}{{\bf Overall Setup:} We used $r_j = \frac{j}{k}r_B$ with $k=50$ and $r_B$ defined in {\bf Theorem~\ref{thm:bound}} unless specified. All the baseline algorithms with optimal hyperparameters, related datasets, and models were acquired from the corresponding authors' websites. The only exception is backdoor detection, in which we created our own models. However, we have carefully fine-tuned the baseline methods' hyperparameters to ensure their best performance over other hyperparameter choices.}

\subsubsection{ Novelty Detection}
We evaluated our classifier on 100 small UCI datasets \cite{UCI,DG19} and recorded the area under the receiver operating characteristic curve (AUROC).  Fig.~\ref{fig:errorbar} shows the mean and standard deviation of AUROC for ours and other classifiers. \textcolor{\colorname}{Detailed numerical results are in Table~\ref{tab:novelty} in Appendix~\ref{app:novelty}.} The implementation code is provided in the supplementary material. 

Our classifier outperforms all the baseline methods by showing the highest average and lowest standard deviation of AUROC. Besides the results, our classifier is distribution-free, computation-efficient, and memory-efficient, whereas some other classifiers do not. Our method is also easy to explain and understand: the score  measures the maximum similarity between the new input and the available in-class samples. Therefore, we conclude  that our classifier is valid for novelty detection.

\begin{figure}
    \centering
    \includegraphics[width=\linewidth]{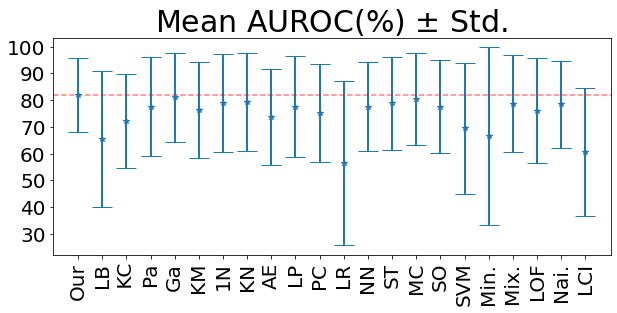}
    \caption{Evaluation on 100 small UCI datasets for novelty detection. Details are in Table~\ref{tab:novelty}.}
    \label{fig:errorbar}
\end{figure}

\subsubsection{ Out-of-Distribution Detection}
We used CIFAR-10 and CIFAR-100 \textcolor{\colorname}{testing datasets} \cite{KH09} as the in-distribution datasets. The compared methods contain MSP \cite{DK17}, Mahalanobis \cite{LLLS18}, Energy score \cite{LWOL20}, and GEM \cite{ML22}. We used WideResNet \cite{ZK16} to extract features from the raw data. \textcolor{\colorname}{The WideResNet models (well-trained on CIFAR-10 and CIFAR-100 training datasets) and corresponding feature extractors were acquired from \cite{ML22}. All the methods were evaluated  in the same feature spaces with their optimal hyperparameters for fair comparisons.} To fit the \textcolor{\colorname}{score function's} parameters for all the methods, we formed a small dataset by randomly selecting 10 samples from each class. The out-of-distribution datasets include Textures \cite{CMKMV14}, SVHN \cite{NWCBWN11}, LSUN-Crop \cite{YSZSFX15}, LSUN-Resize \cite{YSZSFX15}, and iSUN \cite{XEZFKX15}.  We used three metrics: the detection accuracy for out-of-distribution samples when the detection accuracy for in-distribution samples is $95\%$ (TPR95), AUROC, and area under precision and recall (AUPR). 

\begin{table*}
    \centering
    \caption{\textcolor{\colorname}{Average performance on various out-of-distribution datasets. Details are in Tables~\ref{tab:out-of-distribution_CIFAR10} and \ref{tab:out-of-distribution_CIFAR100}.}}
    \begin{tabular}{c|cccccc}
    \hline
     In-Distributions & Method & TPR95  & AUROC  & AUPR  & \textcolor{\colorname}{Time/Sample}  & \textcolor{\colorname}{Memory}   \\
        \hline
   \multirow{5}{*}{CIFAR-10} &   Ours & {\bf 81.39\%} & {\bf 96.08\%} & 96.17\%  & \textcolor{\colorname}{3.0ms} & \textcolor{\colorname}{{\bf 1048.22MiB}} \\
     & MSP & 50.63\% & 91.46\% & {\bf 98.07\%} & \textcolor{\colorname}{{\bf 0.02ms}} &  \textcolor{\colorname}{1825.21MiB} \\
     & Mahala. & 46.83\% & 90.46\% & 97.92\%  & \textcolor{\colorname}{30.61ms} & \textcolor{\colorname}{1983.17MiB} \\
     & Energy  & 68.31\% & 92.32\% & 97.96\% & \textcolor{\colorname}{0.22ms} & \textcolor{\colorname}{1830.01MiB} \\
     & GEM & 50.81\% & 90.45\% & 97.91\% & \textcolor{\colorname}{25.62ms} & \textcolor{\colorname}{1983.51MiB} \\
     \hline
     \multirow{5}{*}{CIFAR-100}    & Ours & {\bf 72.93\%} & {\bf 93.53\%} & 93.69\% & \textcolor{\colorname}{3.0ms} & \textcolor{\colorname}{{\bf 1134.32MiB}} \\
     & MSP & 19.87\% & 75.97\% & 94.09\% & \textcolor{\colorname}{{\bf 0.02ms}} & \textcolor{\colorname}{1825.98MiB} \\
     & Mahala. & 48.35\% & 84.90\% & 96.37\% & \textcolor{\colorname}{56.24ms} & \textcolor{\colorname}{1983.81MiB} \\
     & Energy  & 27.91\% & 80.44\% & 95.15\% & \textcolor{\colorname}{0.21ms} & \textcolor{\colorname}{1838.23MiB} \\
     & GEM &  48.36\% & 84.94\% & {\bf 96.38\%} & \textcolor{\colorname}{56.27ms} & \textcolor{\colorname}{1984.77MiB} \\
     \hline
    \end{tabular}
    \label{tab:ODDave}
\end{table*}

\textcolor{\colorname}{Table~\ref{tab:ODDave} shows the average results for CIFAR-10  and  CIFAR-100. The details for each individual out-of-distribution dataset can be found in Tables~\ref{tab:out-of-distribution_CIFAR10} and \ref{tab:out-of-distribution_CIFAR100} in Appendix~\ref{app:odd}. }
Our method outperforms the other methods by \textcolor{\colorname}{using the least memory} and showing the highest TPR95 and AUROC on average. The AUPR of our approach is in the same range as other baseline methods and the execution time of our approach for each sample is around 3 milliseconds (ms).  We further evaluated our approach by using different numbers of indicator functions $k$ and plotted the results in Fig.~\ref{fig:improve}. From the figure, the performance of our approach increases with more indicator functions being used and eventually converges to a limit. This limit is determined by the out-of-distribution dataset, the tightness of our bound in {\bf Corollary}~\ref{cor:bound}, and the form of utilized indicator functions.

\begin{figure*}
    \centering
    \includegraphics[width=0.9\textwidth]{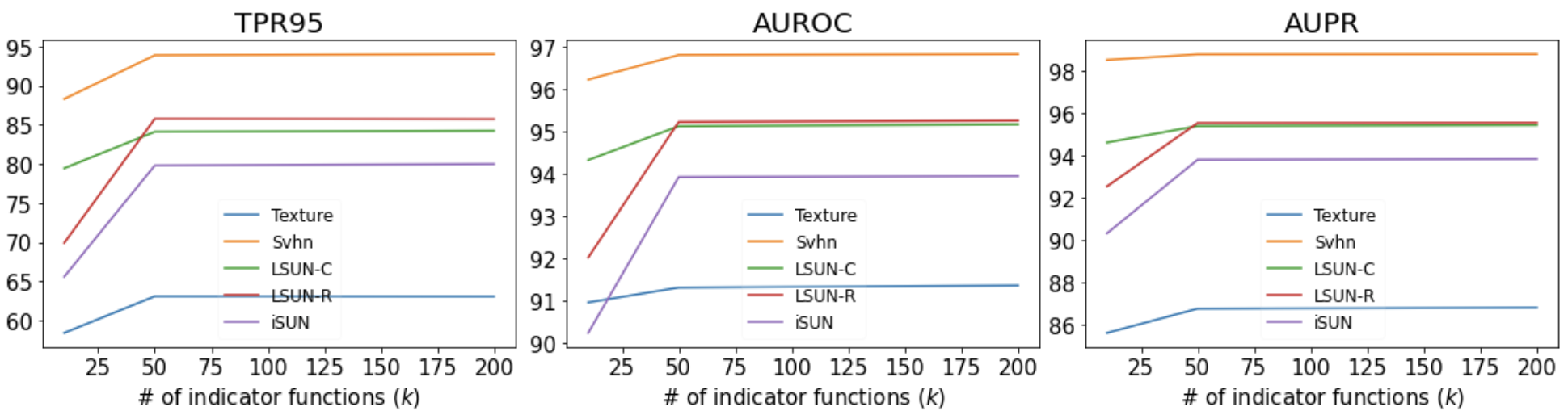}
    \caption{\textcolor{\colorname}{Performance of our approach with different $k$ when CIFAR-10 is the in-distribution data.}}
    \label{fig:improve}
\end{figure*}

We also evaluated our approach with balanced datasets because the above out-of-distribution datasets are much different in size from the in-distribution datasets.  \textcolor{\colorname}{On balanced datasets, our approach shows higher AUPR than the baseline methods as shown in Table~\ref{tab:AVEbackdoor} with details in Table~\ref{tab:backdoor} in Appendix~\ref{app:backdoor}.  We also empirically observed that the compared baseline methods reported errors when data dimensions are dependent because the compared baseline methods need to calculate the inverse of the estimated covariance matrices that will not be full rank if data dimensions are dependent. We have reported this observation in Table~\ref{tab:backdoor} in the appendix. In contrast, our approach works since it does not require finding the inverse of any matrices. Further, Table~\ref{tab:ODDave} and Table~\ref{tab:AVEbackdoor} together show that the baseline methods perform well only for out-of-distribution detection, whereas our approach performs well for both out-of-distribution detection and backdoor detection (details are explained in next subsection). }
In summary, our classifier is a valid out-of-distribution detector.

\subsubsection{ Backdoor Detection}

\begin{figure}
    \centering
    \includegraphics[width=\linewidth]{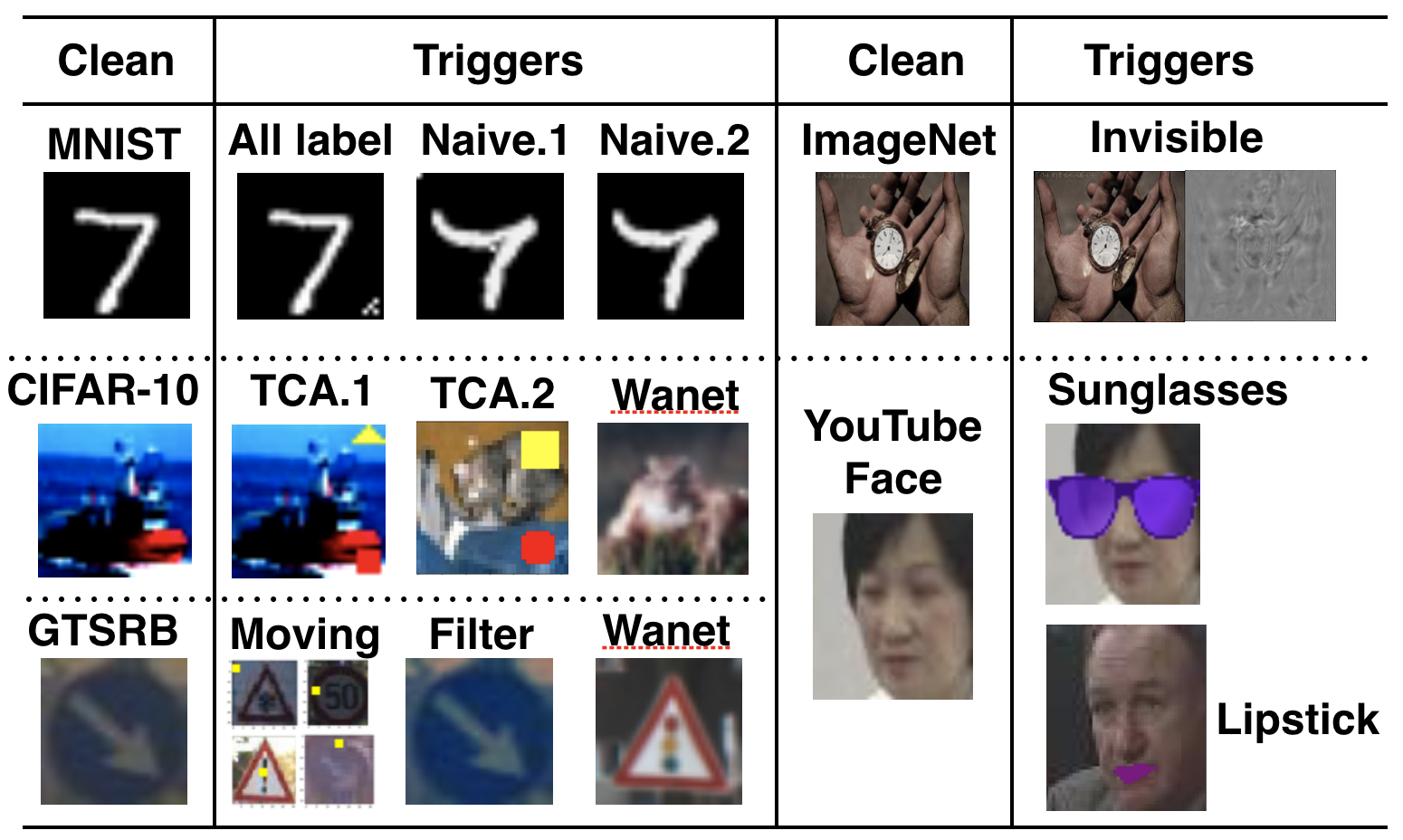}
    \caption{ \textcolor{\colorname}{Pictures under "Triggers" are poisoned samples regarding different backdoored attacks. Pictures under "Clean" are clean samples for each dataset.}}
    \label{fig:trigger}
\end{figure}

The utilized datasets are MNIST \cite{LCB10}, CIFAR-10 \cite{KH09}, GTSRB \cite{SSSI11}, YouTube Face \cite{WHM11}, and sub-ImageNet \cite{JWRLKL09}. The adopted backdoor attacks include naive triggers, all-label attacks \cite{GLDG19}, moving triggers \cite{FVKGK22}, Wanet \cite{TA21}, combination attacks, large-sized triggers, filter triggers, and invisible sample-specific triggers \cite{LLWLHL21}, \textcolor{\colorname}{as listed in Fig.~\ref{fig:trigger}.}  The neural network architecture includes Network in Network \cite{LCY13},   Resnet \cite{HZRS16}, and other networks from \cite{WYSLVZZ19,GLDG19}.  For each backdoor attack, we assume that a small clean validation dataset is available (i.e., 10 samples from each class) at the beginning. Therefore, the poisoned samples (i.e., samples attached with triggers) can be considered out-class samples, whereas the clean samples can be considered in-class samples. We used the backdoored network to extract data features. Then, we evaluated our one-class classifier and compared it with \textcolor{\colorname}{the previous baseline methods} and STRIP \cite{GXWCRN19} in the feature space. The metrics used are the same: TPR95 (i.e., the detection accuracy for poisoned samples when the detection accuracy for clean samples is 95\%), AUROC, and AUPR. \textcolor{\colorname}{Table~\ref{tab:AVEbackdoor} shows the average performance. Details on each individual trigger can be found in Table~\ref{tab:backdoor} in Appendix~\ref{app:backdoor}}  

\begin{table}
    \centering
    \caption{\textcolor{\colorname}{Average performance for backdoor detection over various backdoor triggers and datasets.}}
    \begin{tabular}{c|ccccc}
        \hline
      Metrics (\%) &  Ours & STRIP & \textcolor{\colorname}{Mahalanobis} & \textcolor{\colorname}{GEM} & \textcolor{\colorname}{MSP}  \\
      \hline
        TPR95  & 89.40 & 39.60 & \textcolor{\colorname}{56.97} & \textcolor{\colorname}{{\bf 91.57}} & \textcolor{\colorname}{39.24} \\
      AUROC & {\bf 96.68} & 70.30 & \textcolor{\colorname}{75.94} & \textcolor{\colorname}{58.08} & \textcolor{\colorname}{54.92} \\
      AUPR & {\bf 95.42} & 68.76 & \textcolor{\colorname}{76.37} & \textcolor{\colorname}{75.88} & \textcolor{\colorname}{60.52} \\
     \hline
    \end{tabular}
    \label{tab:AVEbackdoor}
\end{table}

\textcolor{\colorname}{From the table, our classifier outperforms other baseline methods on average by showing the highest AUROC and AUPR. Additionally, our approach also has a very high TPR95.} For each individual trigger, the TPR95 of our method is over 96\% for most cases, the AUROC of our method is over 97\% for most cases, and the AUPR of our method is over 95\% for most cases. It is also seen that our classifier is robust against the latest or advanced backdoor attacks, such as Wanet,  invisible trigger, all label attack, and filter attack, whereas \textcolor{\colorname}{the baseline methods show low performance on those attacks.} Therefore, we conclude that our classifier is valid for backdoor detection.

\subsubsection{ Anomaly Detection with  Iterative Scores}
For an input $x$, denote the confidence score calculated using the condition functions $g_j(x) = \mathds{1}\{ ||x|| \le r_j\}$ as $s(x)$, then its iterative confidence score, $s^\prime(x)$, is calculated by Alg.~\ref{alg:bound} but with the new condition functions $g_j^\prime(x) = \mathds{1}\{ s(x) \le \frac{j}{k}\}$. To evaluate the efficacy of this iterative approach, we consider the comparison with deep network-based classifiers, including DCAE \cite{MF15}, AnoGAN \cite{SSWSL17}, Deep SVDD \cite{RVGDSBMK18}, and OCGAN \cite{PNX19}, for anomaly detection on the CIFAR-10 dataset in the raw image space. The anomaly detection setting considers one class as normal and the remaining classes as anomalous. The baseline methods need to first train deep networks on available normal-class samples and then use the trained models to detect anomalies, whereas our approach does not require this training process. Additionally, we empirically observed that the baseline methods require around 5000 available normal-class training samples to be effective, whereas our approach needs only 100 available normal samples to be effective. For a fair comparison, we allow the baseline methods to use 5000 available normal-class samples to train their deep networks to achieve their best performance. 

 Table~\ref{tab:anomaly} shows the results. Our approach shows the highest AUROC on average. Additionally, our approach shows either the best or the second-best performance for 9 individual cases. Moreover, if the baseline methods are trained with only 100 available normal samples, then their performance will become lower. For example, Deep SVDD shows only 61.1\% AUROC on average. Therefore, our approach is valid and sample-efficient in the raw image space and outperforms the deep network-based anomaly detection methods. 

\begin{table*}
    \centering
    \caption{AUROC (\%) on CIFAR-10 with different normal classes. {\bf Boldface} shows the best performing algorithm, whereas \underline{underline} shows the second best algorithm. }
    \begin{tabular}{c|cccccccccc|c}
        \hline
  \textcolor{\colorname}{Methods}  &     \textcolor{\colorname}{Air.}  &  \textcolor{\colorname}{Aut.} & \textcolor{\colorname}{Bird} & \textcolor{\colorname}{Cat} & \textcolor{\colorname}{Deer} & \textcolor{\colorname}{Dog} & \textcolor{\colorname}{Frog} & \textcolor{\colorname}{Hor.} & \textcolor{\colorname}{Ship} & \textcolor{\colorname}{Tru.} &\textcolor{\colorname}{Ave.} \\
      \hline
   Ours  &   \underline{75.0} &  {\bf 67.9}  &  \underline{56.9}  & \underline{60.4} & \underline{68.6} & \underline{63.1} & \underline{69.9} & 61.0 & \underline{77.9} & {\bf 76.0} & {\bf 67.7}  \\
 OCGAN  &    {\bf 75.7}  &  {53.1}  &  {\bf 64.0}  & {\bf 62.0} & {\bf 72.3} & {62.0} &  {\bf 72.3} & 57.5 & {\bf 82.0} & 55.4 & \underline{65.6} \\
 Deep SVDD &    {61.7}  &  \underline{65.9}  &  {50.8}  & {59.1} & {60.9} & {\bf 65.7} &  {67.7} & {\bf 67.3} & 75.9 & \underline{73.1} & 64.8  \\
 AnoGAN & 67.1 & 54.7 & 52.9 & 54.5 & 65.1 & 60.3 & 58.5 & \underline{62.5} & 75.8 & 66.5 & 61.8 \\
 DCAE & 59.1 & 57.4 & 48.9 & 58.4 & 54.0 & 62.2 & 51.2 & 58.6 & 76.8 & 67.3 & 59.4 \\
     \hline
    \end{tabular}
    \label{tab:anomaly}
\end{table*}

\section{Application of Our Bound to Domain Shift Analysis}
\label{sec:Backdoor}
Our bound is also useful in domain shift analysis.
\begin{theorem}
    \label{thm:classification_accuracy}
    Assume that $D$ and $D^*$ are two different data distributions (i.e., $\eta(D, D^*)<1$) and denote the overall accuracy of the model on $D^*$  as $Acc$. If a model is trained on $D$ with $p$ accuracy on \textcolor{\colorname}{$D$ and $q$ accuracy on $D^*\setminus D$,}  then we have
    \begin{align}
        Acc &\le (p-q)(1 - \frac{1}{2r_{B}} ||\mu_{D} - \mu_{D^*}|| - \\
        & \max_g \frac{r_{B}-r_{A(g)}}{2r_{B}}\left | \mathbb{E}_{D}[g] - \mathbb{E}_{D^*}[g] \right |) +q.
        \label{eq:shift}
    \end{align}
\end{theorem}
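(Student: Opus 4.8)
The plan is to reduce \eqref{eq:shift} to the purely distributional identity $Acc = (p-q)\,\eta(D,D^*) + q$ and then substitute the upper bound for $\eta(D,D^*)$ furnished by Corollary~\ref{cor:bound}. The key observation is that the parenthesized factor multiplying $(p-q)$ in \eqref{eq:shift} is exactly the right-hand side of \eqref{eq:corollary_tight}. Hence, once the identity $Acc = (p-q)\,\eta + q$ is established, the theorem follows in one line, provided $p \ge q$ — which is the natural regime, since a model trained on $D$ should classify the shared (training-like) mass at least as accurately as the genuinely novel mass.

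First I would introduce the model's correctness indicator $c:\mathbb{R}^n \to \{0,1\}$, with $c(x)=1$ iff a point at $x$ is classified correctly, so that $Acc = \int_{\mathbb{R}^n} c(x) f_{D^*}(x)\,dx$. Using the pointwise splitting $f_{D^*} = \min(f_D, f_{D^*}) + (f_{D^*}-f_D)^+$, I would decompose the accuracy integral into a contribution from the \emph{shared} density and a contribution from the \emph{excess} density. By Definition~\ref{def:overlap} the shared density $\min(f_D,f_{D^*})$ integrates to $\eta(D,D^*)$; by Remark~\ref{rem:convertible} the excess density $(f_{D^*}-f_D)^+$ integrates to $\delta(D,D^*) = 1 - \eta(D,D^*)$, and this excess mass is precisely the portion of $D^*$ denoted $D^*\setminus D$.

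Next I would read off the hypotheses: ``$p$ accuracy on $D$'' identifies $p$ with the conditional accuracy on the shared mass, i.e. $\int c(x)\,\min(f_D,f_{D^*})\,dx = p\,\eta(D,D^*)$, while ``$q$ accuracy on $D^*\setminus D$'' identifies $q$ with the conditional accuracy on the excess mass, i.e. $\int c(x)\,(f_{D^*}-f_D)^+\,dx = q\,(1-\eta(D,D^*))$. Adding the two contributions gives $Acc = p\,\eta + q\,(1-\eta) = (p-q)\,\eta + q$. Since Corollary~\ref{cor:bound} bounds $\eta(D,D^*)$ above by the right-hand side of \eqref{eq:corollary_tight} and $p - q \ge 0$, multiplying that bound by $(p-q)$ and adding $q$ preserves the inequality and yields \eqref{eq:shift} exactly.

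The main obstacle is the middle step: justifying the decomposition $Acc = (p-q)\,\eta + q$. This rests on matching the informal hypotheses to the two pieces of the $f_{D^*} = \min(f_D,f_{D^*}) + (f_{D^*}-f_D)^+$ splitting, and in particular on reading ``$p$ accuracy on $D$'' as the accuracy on the shared region (of mass $\eta$) rather than on all of $D$ — these coincide only under the modeling assumption that the model treats overlapping mass the way it treats its training data. I would also make explicit the sign condition $p \ge q$ that the final substitution silently uses. Everything downstream of the identity is a routine application of Corollary~\ref{cor:bound}.
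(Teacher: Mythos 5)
Your proposal matches the paper's own proof essentially step for step: the paper likewise models the pointwise accuracy as $p\,\frac{\min\{f_D,f_{D^*}\}}{f_{D^*}} + q\bigl(1-\frac{\min\{f_D,f_{D^*}\}}{f_{D^*}}\bigr)$, integrates against $f_{D^*}$ to obtain the identity $Acc=(p-q)\,\eta(D,D^*)+q$, and then substitutes the bound from Corollary~\ref{cor:bound}, which is exactly your $\min(f_D,f_{D^*})+(f_{D^*}-f_D)^+$ splitting in fractional form. Your explicit flagging of the implicit sign condition $p\ge q$ (needed so that multiplying the upper bound on $\eta$ by $p-q$ preserves the inequality) is a point the paper's proof silently assumes, but it does not change the fact that the two arguments are the same.
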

\begin{proof}
    See Appendix~\ref{app:proof2}.
\end{proof}
\begin{remark}
    \label{rem:backdoor}
As one case of the domain shift, a backdoor attack scenario (Fig.~\ref{fig:example}(b)) considers that the model has a zero accuracy on poisoned data distribution (i.e., $q=0$) as the attack success rate is almost 100\%. Define the clean data distribution as $D$, poisoned data distribution as $D^p$, and a testing data distribution $D^*$ composed by $D$ and $D^p$, i.e., $D^* = \sigma D + (1-\sigma)D^p$, where $\sigma \in [0,1]$ is the purity ratio (i.e., the ratio of clean samples to the entire testing samples). Then (\ref{eq:shift}) becomes 
\begin{align}
    Acc &\le p (1- \frac{1-\sigma}{2r_B}||\mu_D-\mu_{D^p}|| - \\ 
     & (1-\sigma) \max_g \frac{r_B-r_{A(g)}}{2r_B}|\mathbb{E}_{D}[g] - \mathbb{E}_{D^p}[g]|).
    \label{eq:bound2}
\end{align} 
\end{remark}


{\bf Experimental Illustration of Theorem~\ref{thm:classification_accuracy}}: The backdoor attack scenario is used to illustrate Theorem~\ref{thm:classification_accuracy} using MNIST, GTSRB, YouTube Face, and sub-ImageNet datasets and their domain-shifted versions shown in  ``All label'' for MNIST, ``Filter'' for GTSRB, ``Sunglasses'' for YouTube Face, and ``Invisible'' for sub-ImageNet in Fig.~\ref{fig:trigger}. We composed the testing datasets $D^*$ with $\sigma=0,0.1,...,0.9,1$ and calculated the RHS of (\ref{eq:bound2}) using $L_1$, $L_2$, and $L_\infty$ norms in the raw image space, model output space, and hidden layer space. Fig.~\ref{fig:domainshift} shows the actual model accuracy and corresponding upper bounds with different $\sigma$s. The actual model accuracy is below all the calculated upper limits, which validates {\bf Theorem}~\ref{thm:classification_accuracy}.

Additionally, the difference between actual model accuracy and the calculated upper bound accuracy can reflect the extent of the domain shift in the test dataset. From Fig.~\ref{fig:domainshift}, a large difference reflects a large domain shift. When the domain shift vanishes, the actual model accuracy and the calculated upper bound accuracy coincide. Compared to other methods, our approach is time-efficient.

{\bf Theorem~\ref{thm:classification_accuracy}} can also help infer useful data information to most likely find and remove domain-shifted samples to eliminate the domain shift. Specifically, from Fig.~\ref{fig:domainshift}, the calculated upper bound accuracy varies with norms and feature spaces. The inference is that a low calculated upper bound accuracy implies a high likelihood of distinguishing original and domain-shifted data distributions in that particular utilized feature space. For example, in Fig.~\ref{fig:domainshift} MNIST and YouTube, the input space with $L_\infty$ norm shows the lowest calculated upper bound accuracy. Therefore, the inference is that the original and domain-shifted data distributions are likely to be distinguished in the raw image space. Indeed, even vision inspection can easily distinguish original and domain-shifted samples for the MNIST and YouTube cases given their domain-shifted samples under ``All label'' and ``Sunglasses'' in Fig.~\ref{fig:trigger}. As for GTSRB and ImageNet, the input space has the highest upper limit lines. Therefore, the original and domain-shifted data distributions are less likely to be distinguished in the raw image space. Since their domain-shifted samples are ``Filter'' and ``Invisible'' in Fig.~\ref{fig:trigger}, the vision inspection barely discriminates between original and domain-shifted samples for GTSRB  and sub-ImageNet. Moreover, the hidden layer space gives the lowest estimated upper bound accuracy for GTSRB and ImageNet. Therefore, original and domain-shifted samples are more likely to be visually distinguished in the hidden layer space  as shown in Fig.~\ref{fig:feature}. 

\begin{figure}
    \centering
    \includegraphics[width=0.9\linewidth]{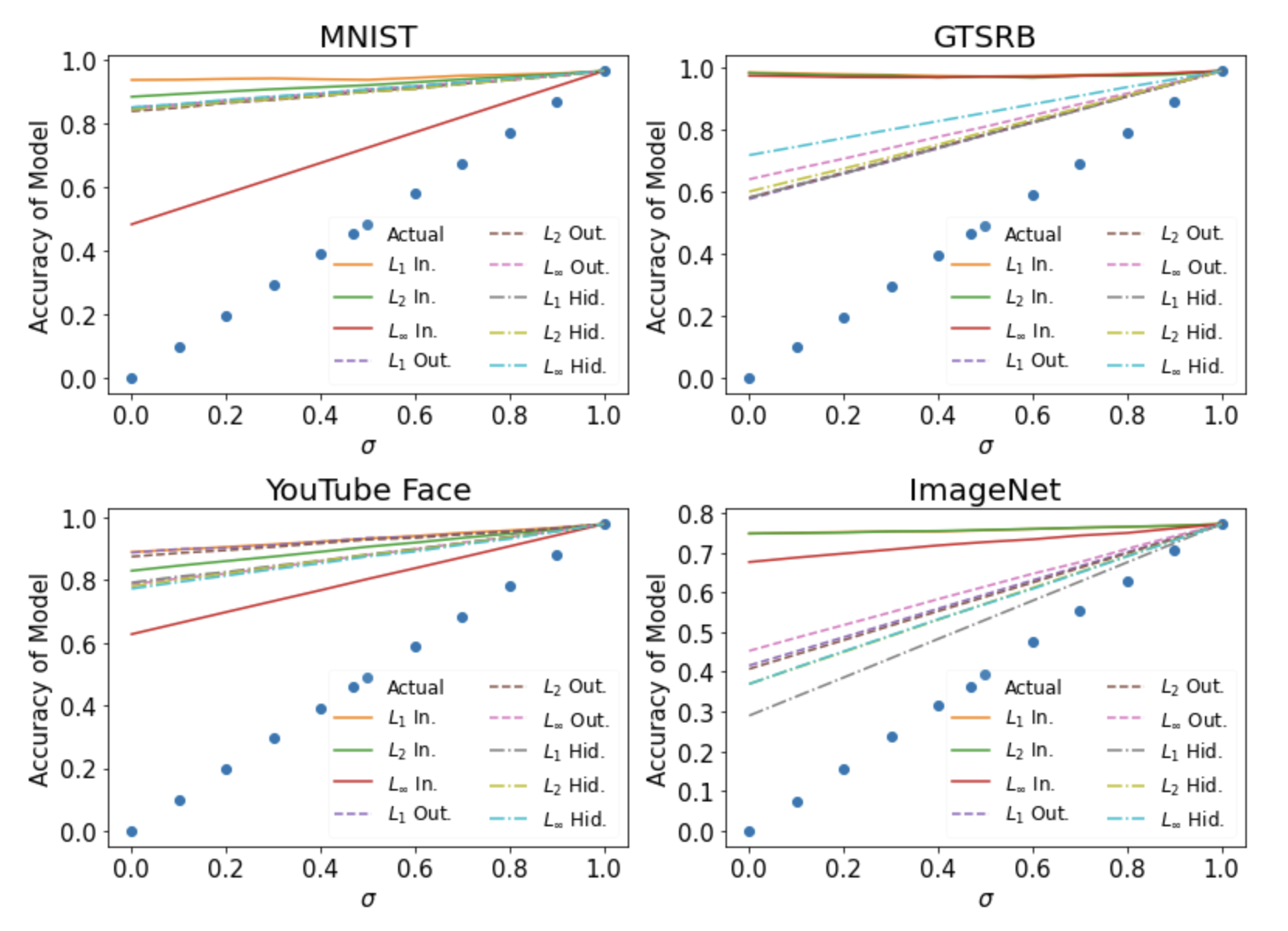}
    \caption{\textcolor{\colorname}{The actual model accuracy (dot) vs. (\ref{eq:shift}) (solid) calculated with $L_1$, $L_2$, and $L_\infty$ norms in input, output, and hidden spaces. x: the ratio of clean samples to the entire testing samples.}}
    \label{fig:domainshift}
\end{figure}

\begin{figure}
    \centering
    \includegraphics[width=0.6\linewidth]{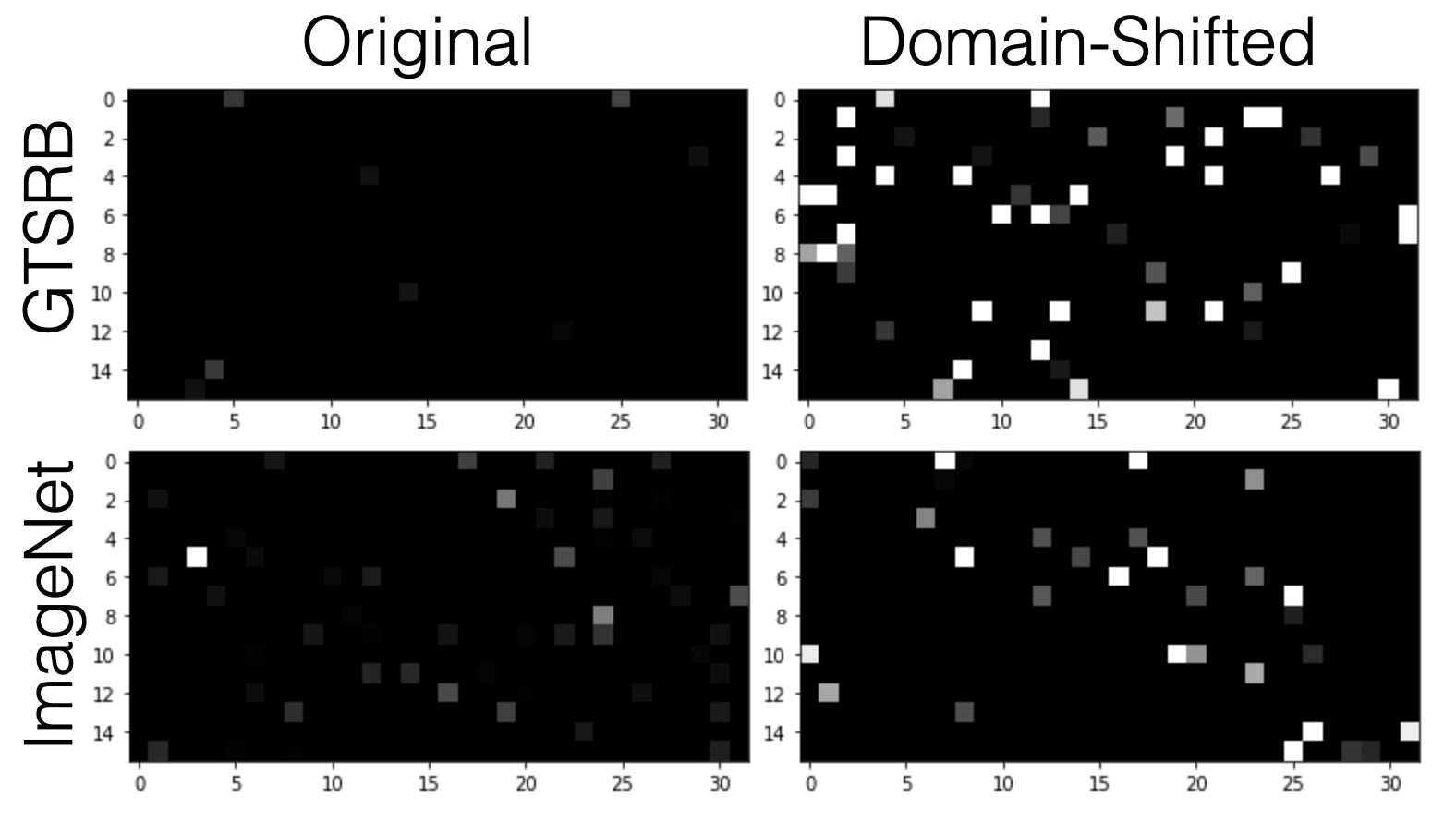}
    \caption{Original and domain-shifted sample in the hidden layer space.}
    \label{fig:feature}
\end{figure}

\section{Conclusion}
\label{sec:Conclusion}
This paper proposes an easy-to-compute  distribution-free upper bound for the distribution overlap index.  The computation of the bound is time-efficient and memory-efficient and only requires finite samples. Two applications of the bound are explored. The first application is for one-class classification. Specifically, this paper introduces a novel distribution-free one-class classifier with the bound being its confidence score function. The classifier is sample-efficient, computation-efficient, and memory-efficient. The proposed classifier is evaluated on novelty detection, out-of-distribution detection,  backdoor detection, and anomaly detection on various datasets and outperforms many state-of-the-art methods. The second application is for domain shift analysis with a proposed theorem to detect the domain shift existence and infer useful data information.  The obtained results show significant promise toward broadening the application of overlap-based metrics.

\bibliography{iclr2023_conference}
\bibliographystyle{IEEEtran}

\newpage
\onecolumn
\appendix
\section{Proof of {\bf Theorem~\ref{thm:bound}}}
\label{app:proof}
\begin{proof}
Let $f_{D^+}$ and $f_{D^-}$ be the probability density functions for $D^+$ and $D^-$. From (\ref{eq:convertible}), we have
     \begin{align}
         \eta(D^+, D^-) &
          = 1 -  \delta_{A}(D^+, D^-) -  \delta_{A^c}(D^+, D^-).
          \label{eq:ineq_eta}
     \end{align}
Using  (\ref{eq:ineq_eta}), triangular inequality, and boundedness, we obtain
     \begin{align}
        ||\mu_{D^+} - \mu_{D^-} || &= ||\int_{B} x \left (f_{D^+}(x) - f_{D^-}(x) \right) dx||
        \le \int_{B} ||x(f_{D^+}(x) - f_{D^-}(x))||dx \\
        &= \int_{A} ||x|| \cdot |f_{D^+}(x) - f_{D^-}(x)|dx
        + \int_{A^c} ||x|| \cdot |f_{D^+}(x) - f_{D^-}(x)|dx \\
        &\le 2r_A \delta_{A} + 2r_{A^c}\delta_{A^c} = 2r_A \delta_{A} + 2r_{A^c}(1-\delta_A - \eta(D^+, D^-))
        \label{eq:ineq_mu}
    \end{align}
  which implies (\ref{eq:theorem}).
    Replacing $r_{A^c}$ with $r_B$ in (\ref{eq:ineq_mu}) implies (\ref{eq:theorem_special}). 
\end{proof}
\section{Proof of {\bf Theorem~\ref{thm:classification_accuracy}}}
\label{app:proof2}
\begin{proof}
    Let $f_D$ and $f_{D^*}$ be their probability density functions,  then
    \begin{align}
        Acc &= \int_{x\sim D^*} \left(p\frac{\min \{f_D(x), f_{D^*}(x) \}}{f_{D^*}(x)}+ q\left(1 - \frac{\min \{f_D(x), f_{D^*}(x)}{f_{D^*}(x)} \right)  \right )f_{D^*}(x) dx \\
        &= p\eta(D, D^*)+q(1-\eta(D, D^*)) = (p-q)\eta(D,D^*)+q \\
        &\le (p-q)(1 - \frac{1}{2r_{B}} ||\mu_{D} - \mu_{D^*}|| - \max_g \frac{r_{B}-r_{A(g)}}{2r_{B}}\left | \mathbb{E}_{D}[g] - \mathbb{E}_{D^*}[g] \right |) +q.
    \end{align}
\end{proof}
\section{Details for Novelty Detection}
\label{app:novelty}
The method in Table~\ref{tab:novelty} is in the same order as shown in Fig.~\ref{fig:errorbar}.
\begin{table}[ht]
    \centering
    \caption{Means and standard deviations of AUROC ($\%$) for different methods on 100 UCI datasets.}
    \begin{tabular}{cccccc}
    \hline
       \multirowcell{1}{Ours} & \multirowcell{1}{L1-Ball} & \multirowcell{1}{K-Center} & \multirowcell{1}{Parzen} &\multirowcell{1}{Gaussian}  & \multirowcell{1}{K-Mean}  \\
        $81.9 \pm 13.8$ & $65.4\pm 25.4$ & $72.2\pm17.6$ & $77.6\pm 18.6$ & $81.1\pm 16.6$ & $76.3 \pm 17.9$ \\
        \hline
      \multirowcell{2}{1-Nearest \\ Neighbor} & \multirowcell{2}{K-Nearest \\ Neighbor}  & \multirowcell{2}{Auto-Encoder \\ Network} & \multirowcell{2}{Linear \\ Programming} & \multirowcell{2}{Principal \\ Component} & \multirowcell{2}{Lof \\ Range} \\
      \\
      $78.8\pm 18.3$ & $79.2 \pm 18.3$ & $73.7\pm 17.9$ & $77.5\pm 18.9$ & $75.1 \pm 18.3$ & $56.5\pm 30.8$ \\
      \hline
      \multirowcell{3}{Nearest \\ Neighbor \\ Distance} & \multirowcell{3}{Minimum \\ Spanning \\ Tree} & \multirowcell{3}{Minimum \\ Covariance \\ Determinant} & \multirowcell{3}{Self \\ Organizing \\ Map} & \multirowcell{3}{Support \\ Vector \\ Machine} & \multirowcell{3}{Minimax \\ Probability \\ Machine} \\
      \\
      \\
      $77.5\pm 16.6$ & $78.8 \pm 17.4$ & $80.4\pm 17.1$ & $77.6\pm 17.4$ & $69.4\pm 24.6$ & $66.4\pm 33.3$ \\
      \hline
      \multirowcell{2}{Mixture \\ Gaussians} & \multirowcell{2}{Local Outlier \\ Factor} & \multirowcell{2}{Naive \\ Parzen} &  
      \multirowcell{2}{Local  Correlation \\ Integral} \\
      \\
      $78.7\pm 18.1$ & $76.1\pm 19.7$ & $78.4\pm 16.2$ & $60.7\pm 23.9$ \\
      \hline
    \end{tabular}
    \label{tab:novelty}
\end{table}
\section{Details for Out-of-Distribution Detection}
\label{app:odd}
Table~\ref{tab:out-of-distribution_CIFAR10} is for CIFAR-10 case and Table~\ref{tab:out-of-distribution_CIFAR100} is for CIFAR-100 case.
\begin{table}[ht]
\caption{Results for CIFAR-10 in-distribution case (higher number implies higher accuracy).  {\bf Boldface} shows the best performing algorithm, whereas \underline{underline} shows the second best algorithm.}
    \centering
    \begin{tabular}{cc|cccc}
    \hline
      Out-of-Distribution Datasets & Method & TPR95 (\%)  & AUROC  (\%) & AUPR (\%)  \\
      \hline
     \multirow{5}{*}{Texture}    & Ours & \underline{64.20} & 92.80 & 92.33 \\
     & MSP & 40.75 & 88.31 & 97.08 \\
     & Mahalanobis & {62.38} & \underline{94.46} & \underline{98.75} \\
     & Energy Score & 47.47 & 85.47 & 95.58 \\
     & GEM & {\bf 72.61} & {\bf 94.59} & {\bf 98.79} \\
     \hline
     \multirow{5}{*}{SVHN}    & Ours & {\bf 94.10} & {\bf 98.56} & {\bf 99.41} \\
     & MSP & 52.41 & 92.11 & 98.32 \\
     & Mahalanobis & \underline{79.34} & \underline{95.72} & \underline{99.04} \\
     & Energy Score & 64.20 & 91.05 & 97.66 \\
     & GEM & 79 & 95.65 & {99.01} \\
     \hline
     \multirow{5}{*}{LSUN-Crop}    & Ours & \underline{83.63} & \underline{96.60} & 96.61 \\
     & MSP & 69.07 & {95.64} & \underline{99.13} \\
     & Mahalanobis & 30.06 & 86.15 & 97.05 \\
     & Energy Score & {\bf 91.89} & {\bf 98.40} & {\bf 99.67} \\
     & GEM & 30.20 & 86.09 & 97.03 \\
     \hline
      \multirow{5}{*}{LSUN-Resize}    & Ours & {\bf 85.41} & {\bf 96.84} & 96.86 \\
     & MSP & 47.45 & 91.30 & \underline{98.11} \\
     & Mahalanobis & 35.64 & 88.12 & 97.45 \\
     & Energy Score & \underline{71.75} & \underline{94.12} & {\bf 98.64} \\
     & GEM & 35.45 & 88.09 & 97.43 \\
     \hline
      \multirow{5}{*}{iSUN}    & Ours & {\bf 79.62} & {\bf 95.64} & 95.68 \\
     & MSP & 43.40 & 89.72 & \underline{97.72} \\
     & Mahalanobis & 26.77 & 87.87 & 97.33 \\
     & Energy Score & \underline{66.27} & \underline{92.56} & {\bf 98.25} \\
     & GEM & 36.80 & 87.85 & 93.33 \\
     \hline
      \multirow{5}{*}{Average Performance}    & Ours & {\bf 81.39} & {\bf 96.08} & 96.17 \\
     & MSP & 50.63 & 91.46 & {\bf 98.07} \\
     & Mahalanobis & 46.83 & 90.46 & 97.92 \\
     & Energy Score & \underline{68.31} & \underline{92.32} & \underline{97.96} \\
     & GEM & 50.81 & 90.45 & 97.91 \\
     \hline
    \end{tabular}
    \label{tab:out-of-distribution_CIFAR10}
\end{table}

\begin{table}[ht]
\caption{Results for CIFAR-100 in-distribution case (higher number implies higher accuracy). {\bf Boldface} shows the best performing algorithm, whereas \underline{underline} shows the second best algorithm.}
    \centering
    \begin{tabular}{cc|cccc}
        \hline
      Out-of-Distribution Datasets & Method & TPR95 (\%)  & AUROC  (\%) & AUPR (\%)  \\
      \hline
     \multirow{5}{*}{Texture}    & Ours & 42.50 & 85.79 & 85.27 \\
     & MSP & 16.71 & 73.58 & 93.02 \\
     & Mahalanobis & {\bf 57.62} & \underline{90.14} & \underline{97.62} \\
     & Energy Score & 20.38 & 76.46 & 93.68 \\
     & GEM & \underline{57.40} & {\bf 90.17} & {\bf 97.63} \\
     \hline
     \multirow{5}{*}{SVHN}    & Ours & {\bf 93.75} & {\bf 98.36} & {\bf 99.36} \\
     & MSP & 15.66 & 71.37 & 92.89 \\
     & Mahalanobis & 51.35 & 89.25 & 97.52 \\
     & Energy Score & 14.59 & 74.10 & 93.65 \\
     & GEM & \underline{51.51} & \underline{89.40} & \underline{97.57} \\
     \hline
     \multirow{5}{*}{LSUN-Crop}    & Ours & \underline{57.76} & \underline{89.95} & 90.03 \\
     & MSP & {33.44} & {83.71} & \underline{96.32} \\
     & Mahalanobis & 1.53 & 58.48 & 89.73 \\
     & Energy Score & {\bf 64.01} & {\bf 93.41} & {\bf 98.59} \\
     & GEM & 1.70 & 58.42 & 89.70 \\
     \hline
      \multirow{5}{*}{LSUN-Resize}    & Ours & {\bf 88.49} & {\bf 97.56} & 97.55 \\
     & MSP & 16.54 & 75.32 & 94.03 \\
     & Mahalanobis & \underline{67.20} & {93.97} & {\bf 98.70} \\
     & Energy Score & 21.38 & 79.29 & 94.97 \\
     & GEM & {67.09} & \underline{94.01} & {\bf 98.70} \\
     \hline
      \multirow{5}{*}{iSUN}    & Ours & {\bf 82.15} & {\bf 96.05} & 96.24 \\
     & MSP & 17.02 & 75.87 & 94.20 \\
     & Mahalanobis & {64.07} & {92.69} & {\bf 98.32} \\
     & Energy Score & 19.20 & 78.98 & 94.90 \\
     & GEM & \underline{64.10} & \underline{92.73} & {\bf 98.32} \\
     \hline
      \multirow{5}{*}{Average Performance}    & Ours & {\bf 72.93} & {\bf 93.53} & 93.69 \\
     & MSP & 19.87 & 75.97 & 94.09 \\
     & Mahalanobis & {48.35} & 84.90 & \underline{96.37} \\
     & Energy Score & 27.91 & 80.44 & 95.15 \\
     & GEM & \underline{48.36} & \underline{84.94} & {\bf 96.38} \\
     \hline
    \end{tabular}
    \label{tab:out-of-distribution_CIFAR100}
\end{table} 

\section{Details for Backdoor Detection}
\label{app:backdoor}
Table~\ref{tab:backdoor} shows the details for backdoor detection.

\begin{table}[ht]
    \centering
    \caption{Comparison results for backdoor detection (higher number implies higher accuracy).}
    \begin{tabular}{ccc|ccccc}
        \hline
      Datasets & Trigger & Metrics (\%) &  Ours & STRIP & Mahalanobis & GEM & MSP  \\
      \hline
     \multirow{3}{*}{MNIST} &   \multirow{3}{*}{All label}  & TPR95  & 83.05 & 2.58  & 50.83 & {\bf 100} & 100 \\
     & & AUROC & {\bf 96.13} & 44.69 & 90.78 & 50.43 & 50 \\
     & & AUPR & {\bf 94.20} & 35.47  & 86.71 & 70.94 & 70.83 \\
     \hline
     \multirow{3}{*}{MNIST} &   \multirow{3}{*}{Naive.1}  & TPR95  & {\bf 100} & 98.85  & 99.86 & 100 & 5.11 \\
     & & AUROC & {\bf 97.50} & 97.32 & 97.49 & 53.95 & 51.64  \\
     & & AUPR & 96.17 & 95.95 & {\bf 96.38} & 74.74 & 50.41 \\
     \hline
     \multirow{3}{*}{MNIST} &   \multirow{3}{*}{Naive.2}  & TPR95  & 96.53 & 67.46 & 35.16 & {\bf 100} & 14.69 \\
     & & AUROC & {\bf 97.28} & 93.67 & 78.63 & 53.51 & 58.14 \\
     & & AUPR & {\bf 95.75} & 89.85 & 78.65 & 74.62 & 64.16 \\
     \hline
     \multirow{3}{*}{CIFAR-10} &   \multirow{3}{*}{TCA.1}  & TPR95  & {\bf 100} & 35.68 & 100 & 100 & 4.38 \\
     & & AUROC & {\bf 97.50} & 83.00 & 97.49 & 50 & 49.23 \\
     & & AUPR & 95.47 & 73.22 & {\bf 97.84} & 76.32 & 52.64 \\
     \hline
     \multirow{3}{*}{CIFAR-10} &   \multirow{3}{*}{TCA.2}  & TPR95  & {\bf 100} & 27.86 & 100 & 100 & 0.02  \\
     & & AUROC & {\bf 97.50} & 68.79 & 97.49 & 50 & 29.90 \\
     & & AUPR & {\bf 97.63} & 72.41 & 95.86 & 67.86 & 18.05 \\
     \hline
     \multirow{3}{*}{CIFAR-10} &   \multirow{3}{*}{Wanet}  & TPR95  &  37.87 & 0.07 & 20.35 & 22.90 & {\bf 100} \\
     & & AUROC & {\bf 92.74} & 34.97 & 50.61 & 57.81 & 50 \\
     & & AUPR & {\bf 89.95} & 37.42 & 57.30 & 68.48 & 74.87 \\
     \hline
     \multirow{3}{*}{GTSRB} &   \multirow{3}{*}{Moving}  & TPR95  & {\bf 99.99} & 54 & \multicolumn{3}{c}{\multirowcell{3}{Fail: dependent data dimensions}} \\
     & & AUROC & {\bf 85.39} & 7.29 &  \\
     & & AUPR & {\bf 96.96} & 89.07 & \\
     \hline
      \multirow{3}{*}{GTSRB} &   \multirow{3}{*}{Filter}  & TPR95  & {\bf 85.39} & 7.29 & \multicolumn{3}{c}{\multirowcell{3}{Fail: dependent data dimensions}} \\
     & & AUROC & {\bf 96.54} & 38.92 &  \\
     & & AUPR & {\bf 95.42} & 38.81 &  \\
     \hline
      \multirow{3}{*}{GTSRB} &   \multirow{3}{*}{Wanet}  & TPR95  &{\bf  100} & 1.24  & 0.51 & 100 & 100 \\
     & & AUROC & {\bf 97.50} & 36.31 & 54.46 & 50 & 50 \\
     & & AUPR & {\bf 97.62} & 39.53 & 48.92 & 75.23 & 75.23 \\
     \hline
      \multirow{3}{*}{YouTube Face} &   \multirow{3}{*}{Sunglasses}  & TPR95  & 73.37 & 83.03 & 71.64 & {\bf 98.58} & 13.06 \\
     & & AUROC &{\bf 95.21}  & 94.80 & 94.38 & 84.29 & 66.55 \\
     & & AUPR & 93.00 & {\bf 95.54} & 94.63 & 88.83 & 53.27 \\
     \hline
      \multirow{3}{*}{YouTube Face} &   \multirow{3}{*}{Lipstick}  & TPR95  & {\bf 96.64} & 90.14 & 90.88 & 94.18 & 3.73 \\
     & & AUROC & {\bf 97.21} & 93.15 & 93.26 & 80.80 & 50.14 \\
     & & AUPR & {\bf 96.30} & 94.98 & 95.09 & 86.53 & 53.27 \\
     \hline
      \multirow{3}{*}{sub-ImageNet} &   \multirow{3}{*}{Invisible}  & TPR95  & {\bf 100} & 7.01  & 0.5 & 100 & 51.40 \\
     & & AUROC & {\bf 97.49} & 66.26 & 4.78 & 50 & 93.61 \\
     & & AUPR & {\bf 96.53}  & 62.83 & 12.27 & 75.26 & 92.46 \\
     \hline
     \multicolumn{2}{c}{\multirow{3}{*}{Average Performance}}    & TPR95  & 89.40 & 39.60 & 56.97 & {\bf 91.57} & 39.24 \\
     & & AUROC & {\bf 96.68} & 70.30 & 75.94 & 58.08 & 54.92 \\
     & & AUPR & {\bf 95.42} & 68.76 & 76.37 & 75.88 & 60.52 \\
     \hline
    \end{tabular}
    \label{tab:backdoor}
\end{table}
\end{document}